\documentclass{article}

\usepackage[preprint]{neurips_2026}

\usepackage[utf8]{inputenc} 
\usepackage[T1]{fontenc}    
\usepackage{hyperref}       
\hypersetup{hidelinks}      
\usepackage{bookmark}       
\usepackage{url}            
\usepackage{booktabs}       
\usepackage{amsfonts}       
\usepackage{nicefrac}       
\usepackage{microtype}      
\usepackage{xcolor}         
\usepackage{graphicx}
\usepackage{amsmath}
\usepackage{mathtools}      
\usepackage{amsthm}

\newtheorem{proposition}{Proposition}
\usepackage{algorithm}
\usepackage{algorithmic}
\usepackage{multirow}
\usepackage{subfigure}
\usepackage{wrapfig}
\usepackage{pifont}

\graphicspath{{media/}}

\title{PixelPrune: Pixel-Level Adaptive Visual Token Reduction via Predictive Coding}

\author{Nan Wang, Zhiwei Jin, Chen Chen, Haonan Lu \\
OPPO AI Center\\
\texttt{\{wangnan,jinzhiwei,chenchen4,luhaonan\}@oppo.com} 
}

\begin{document}

\maketitle

\begin{abstract}
Document understanding and GUI interaction are among the highest-value applications of Vision-Language Models (VLMs), yet they impose exceptionally heavy computational burden: fine-grained text and small UI elements demand high-resolution inputs that produce tens of thousands of visual tokens. We observe that this cost is largely wasteful---across document and GUI benchmarks, only 22--71\% of image patches are pixel-unique, the rest being exact duplicates of another patch in the same image. We propose \textbf{PixelPrune}, which exploits this pixel-level redundancy through predictive-coding-based compression, pruning redundant patches \emph{before} the Vision Transformer (ViT) encoder. Because it operates in pixel space prior to any neural computation, PixelPrune accelerates both the ViT encoder and the downstream LLM, covering the full inference pipeline. The method is training-free, requires no learnable parameters, and supports pixel-lossless compression ($\tau{=}0$) as well as controlled lossy compression ($\tau{>}0$). Experiments across three model scales and document and GUI benchmarks show that PixelPrune maintains competitive task accuracy while delivering up to 4.2$\times$ inference speedup and 1.9$\times$ training acceleration. Code is available at \url{https://github.com/OPPO-Mente-Lab/PixelPrune}.
\end{abstract}

\section{Introduction}
\label{sec:intro}

Vision-Language Models (VLMs)~\citep{liu2023llava, li2025llavaonevision, chen2024internvl, chen2025internvl25, bai2025qwen3vltechnicalreport} achieve impressive performance across multimodal tasks by integrating visual encoders with large language models. However, their computational cost scales directly with the number of visual tokens processed during both training and inference. This cost is especially pronounced in document and GUI scenarios, which demand high-resolution inputs for fine-grained text rendering and small UI elements. This burden is further amplified by the growing adoption of NaViT-style vision encoders~\citep{dehghani2023navit}---exemplified by the Qwen-VL series~\citep{wang2024qwen2vl, bai2025qwen25vl, bai2025qwen3vltechnicalreport, qwen2026qwen35}---which process images at native resolution and can produce tens of thousands of patches per image. For example, a typical document page in DocVQA averages 1775$\times$2082 pixels after preprocessing (Appendix~\ref{app:redundancy}), producing over 14{,}000 ViT patches ($p{=}16$) and roughly 3{,}600 LLM tokens after $2{\times}2$ merging; multi-page documents and multi-step GUI sessions multiply this further.

Existing VLM token reduction methods primarily operate at the semantic or feature level---relying on attention-based pruning~\citep{chen2024fastv, yang2025visionzip}, adaptive pruning-and-merging~\citep{shang2024llavaprumerge}, or diversity-based selection~\citep{alvar2025divprune}. Most are designed for general-domain images~\citep{shao2025survey}. In structured document and GUI domains, however, a more fundamental source of redundancy exists: the raw pixel content itself. Documents are dominated by white margins and uniform page backgrounds; GUI screenshots contain large solid-color panels, toolbars, and status bars---all of which produce many pixel-identical patches. As shown in Figure~\ref{fig:motivation}, large portions of patches correspond to such uniform regions. Our analysis across document and GUI benchmarks reveals that only 22--71\% of patches are pixel-unique within each image, depending on the domain (Appendix~\ref{app:redundancy}). This indicates that pixel-level content alone is a powerful signal for token reduction, eliminating redundancy at its source without learned representations.

\begin{figure*}[t]
	\centering
	\includegraphics[width=0.95\textwidth]{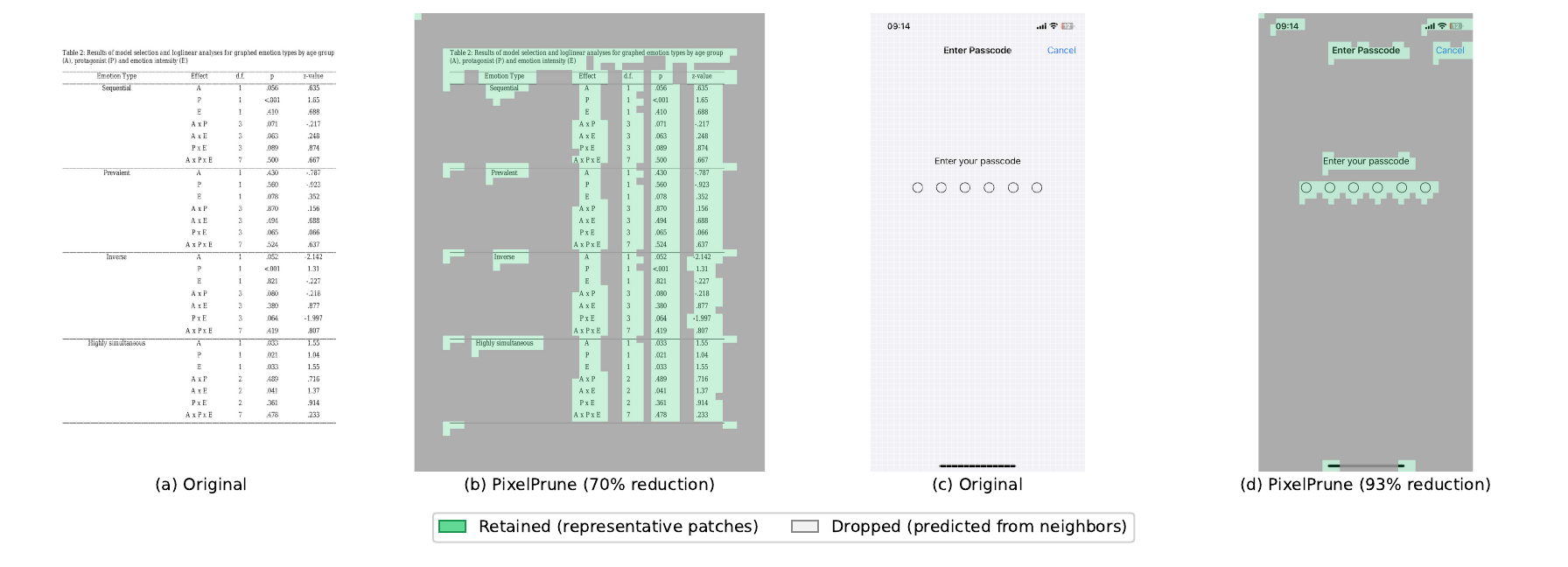}
	\caption{PixelPrune patch selection on a document image (\textit{left}) and GUI screenshot (\textit{right}). Kept patches are shown in original color; dropped patches are grayed out. Token reduction: 70\% (document) and 93\% (GUI).}
	\label{fig:motivation}
\end{figure*}

Moreover, most existing VLM acceleration methods focus exclusively on reducing the visual tokens processed by the LLM---through projector-based compression~\citep{li2023blip2, li2024tokenpacker}, attention-based pruning~\citep{chen2024fastv, he2024zipvl}, or post-ViT merging~\citep{shang2024llavaprumerge, alvar2025divprune}---while leaving the ViT encoder's cost untouched. Many are also validated primarily on fixed-resolution architectures (e.g., LLaVA~\citep{liu2023llava}), where token counts are moderate. This is a significant blind spot: in NaViT-based VLMs, the ViT input sequence scales directly with image resolution, and its self-attention cost grows quadratically with the number of patches. As Figure~\ref{fig:vision_workload} shows, the vision encoder accounts for the majority of prefill latency---up to 86\% at 4096$^2$ resolution for Qwen3-VL-2B---and this fraction grows with resolution. Few methods attempt to reduce the ViT's own input. A more direct approach is to prune visual tokens \textit{before} they enter the ViT, thereby accelerating the entire pipeline from the start.

We propose \textbf{PixelPrune}, which addresses this gap by applying predictive coding~\citep{weinberger2000loco, boutell1997png}---the same principle underlying lossless formats such as PNG and JPEG-LS---at the patch level \textit{before} vision encoding. Each patch is predicted from its causal spatial neighbors; if the prediction matches (within a threshold $\tau$), the patch is omitted as redundant. The method requires no learnable parameters and supports both pixel-lossless ($\tau{=}0$) and controlled lossy ($\tau{>}0$) compression; it can be applied training-free for document understanding, or integrated into training pipelines to simultaneously accelerate training and improve efficiency (\S\ref{sec:gui_post}--\ref{sec:jedi_scratch}). Because it operates before the ViT, PixelPrune accelerates the entire pipeline---ViT encoder, patch merger, and LLM decoder---and is fully compatible with FlashAttention~\citep{dao2022flashattention}.

Our contributions are threefold: (1)~we quantify the pervasive pixel-level patch redundancy in document and GUI images (Appendix~\ref{app:redundancy}) and propose PixelPrune, a parameter-free method that exploits this redundancy for pre-ViT token reduction with theoretical recoverability guarantees (\S\ref{sec:theory}); (2)~because PixelPrune operates before the ViT, it accelerates the \emph{entire} inference pipeline---ViT encoder, patch merger, and LLM decoder---achieving up to 4.2$\times$ end-to-end speedup; and (3)~PixelPrune integrates seamlessly into training pipelines, delivering up to 1.9$\times$ training acceleration while maintaining competitive accuracy across document and GUI benchmarks (\S\ref{sec:gui_post}--\ref{sec:jedi_scratch}).

\begin{figure*}[t]
	\centering
	\includegraphics[width=0.95\textwidth]{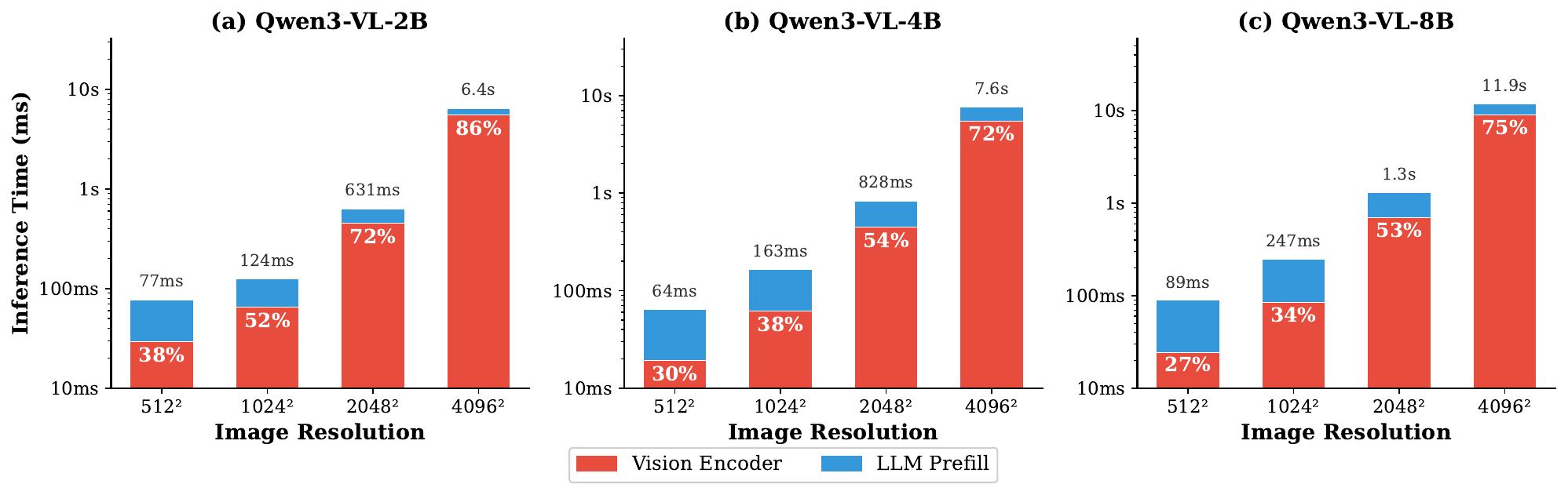}
	\caption{Prefill latency breakdown across Qwen3-VL scales (2B, 4B, 8B) at five resolutions (256$^2$--4096$^2$, log scale). Each bar splits into \textbf{Vision Encoder} (red, including ViT and Patch Merger) and \textbf{LLM Prefill} (blue). At 4096$^2$, the vision encoder accounts for 86\%, 72\%, and 75\% of total prefill time for the 2B, 4B, and 8B models respectively.}
	\label{fig:vision_workload}
	\vspace{-1em}
\end{figure*}

\section{Related Work}
\label{sec:related}
\textbf{Token Reduction in Vision Transformers.}
Vision Transformers (ViT)~\citep{dosovitskiy2020vit} treat images as patch sequences, but their quadratic attention complexity poses challenges for high-resolution inputs. MAE~\citep{he2022mae} shows that masking 75\% of patches still permits accurate reconstruction, motivating a line of token reduction methods. Token pruning approaches such as DynamicViT~\citep{rao2021dynamicvit} and EViT~\citep{liang2022evit} remove uninformative tokens via learned importance scores or attention-based reorganization, while ToMe~\citep{bolya2023tome} progressively merges similar tokens through feature averaging. These methods operate on intermediate features, requiring forward-pass computation before pruning decisions can be made. Most related to our work, RLT~\citep{choudhury2024rlt} applies run-length encoding to video transformers along the \textit{temporal} dimension. While analogous to our Raster-mode prediction, RLT is limited to 1D temporal sequences; PixelPrune operates in the 2D spatial domain and introduces a predictive coding framework that exploits richer spatial redundancy, enabling exact patch compression before ViT feature extraction.

\textbf{Efficient Vision-Language Models.}
VLMs~\citep{liu2023llava,bai2023qwenvl,wang2024qwen2vl} face efficiency challenges from high visual token counts. Existing methods reduce tokens at different pipeline stages: (1)~\textit{Projector-based}: Q-Former~\citep{li2023blip2} and TokenPacker~\citep{li2024tokenpacker} compress features before the LLM; (2)~\textit{Post-ViT}: LLaVA-PruMerge~\citep{shang2024llavaprumerge} and DivPrune~\citep{alvar2025divprune} prune or merge tokens after the vision encoder but before the LLM; (3)~\textit{Inside-LLM}: FastV~\citep{chen2024fastv}, ZipVL~\citep{he2024zipvl}, and PyramidDrop~\citep{xing2024pyramiddrop} prune tokens based on decoder attention scores, which typically require materializing attention weights---at odds with FlashAttention~\citep{dao2022flashattention}; (4)~\textit{Multi-stage}: MustDrop~\citep{liu2024mustdrop} and LUVC~\citep{zheng2025luvc} reduce tokens across multiple stages. IPCV~\citep{chen2025ipcv} takes a step further by compressing intermediate ViT features, partially reducing ViT cost but still requiring a forward pass through the early layers. All of the above operate at or after the feature level, leaving the full ViT encoder's cost---which dominates prefill latency for small-to-medium VLMs (Figure~\ref{fig:vision_workload})---largely unaddressed. PixelPrune instead exploits pixel redundancy \textit{before} vision encoding.

\textbf{Document and GUI Understanding.}
These domains feature high-resolution inputs and significant visual redundancy. While many models address the high-resolution challenge through architectural designs such as cropping, tiling, or dual encoders, few exploit the inherent pixel-level redundancy of these images for token reduction. Document models such as mPLUG-DocOwl~\citep{hu2024docowl,hu2024docowl2}, TextMonkey~\citep{liu2024textmonkey}, and GOT-OCR~\citep{wei2024got} process high-resolution images for text recognition, often relying on cropping or tiling strategies that further inflate token counts. IndexPrune~\citep{son2025indexpruning} prunes tokens before the ViT but trains a neural predictor supervised by OCR annotations; PixelPrune requires no learned parameters or external annotations. For GUI agents, CogAgent~\citep{hong2024cogagent} employs dual encoders, UI-TARS~\citep{qin2025uitars} scales up GUI interaction, Jedi~\citep{chen2025jedi} decomposes UI elements for grounding, and ShowUI~\citep{lin2024showui} reduces tokens by identifying redundant regions during self-attention. ShowUI is closest in spirit to PixelPrune: both exploit spatial redundancy in GUI images, but ShowUI applies token skipping \emph{inside} the ViT self-attention, still requiring a forward pass through the vision encoder; PixelPrune operates on raw pixels \emph{before} the ViT, yielding savings across the entire pipeline.


\section{Method}
\label{sec:method}

\subsection{Preliminaries}
\label{sec:preliminary}

We summarize the VLM visual pipeline using Qwen3-VL~\citep{bai2025qwen3vltechnicalreport} as a representative example, and highlight the architectural property that makes pre-ViT patch removal well-defined.

\paragraph{Architecture overview.}
An input image is divided into $N = (H/p) \times (W/p)$ non-overlapping patches of size $p{\times}p$ ($p{=}16$). Each patch $P_i$ is projected to a $D_v$-dimensional embedding $\mathbf{e}_i$ and paired with its 2D grid coordinate $\mathbf{p}_i = (r_i, c_i)$, which is injected via interpolated absolute positional embeddings and 2D RoPE. The sequence $\{(\mathbf{e}_i, \mathbf{p}_i)\}_{i=1}^{N}$ is processed by a $L_v$-layer ViT with $O(N^2 D_v)$ attention cost per layer, then spatially compressed by Patch Mergers ($M{\times}M$ grouping, $M{=}2$; including DeepStack mergers at selected intermediate ViT layers) to $N_m = N/M^2$ tokens, which are concatenated with $T$ text tokens to form the LLM input. In the LLM, spatial layout is encoded via MRoPE~\citep{bai2025qwen3vltechnicalreport}, which assigns each visual token a 3D coordinate (temporal, height, width). Crucially, spatial information throughout the pipeline is determined by \textbf{positional encodings rather than sequence order}, so feeding a \textit{subset} of patches while preserving their original coordinates is an architecturally supported operation.

\subsection{PixelPrune: Patch Compression via Predictive Coding}
\label{sec:selectpatch}

Given $N$ patch--position pairs $\{(P_i, \mathbf{p}_i)\}_{i=1}^{N}$, the goal is to identify a compact subset $\mathcal{S} \subseteq \{1,\dots,N\}$ of patches to retain such that downstream task performance is preserved. As motivated in \S\ref{sec:intro}, document and GUI images exhibit extensive pixel-level redundancy, where large regions of uniform background produce many pixel-identical neighboring patches that carry no additional information.

To systematically identify such patches, we draw on \textbf{predictive coding} from lossless image compression: classical formats such as PNG~\citep{boutell1997png} and JPEG-LS~\citep{weinberger2000loco} predict each pixel from its spatial neighbors and encode the prediction residual, which is typically small and thus highly compressible. We adapt this idea from individual pixels to $Mp{\times}Mp$ patch blocks: when the residual is exactly zero (i.e., a patch block is identical to its prediction), the block is fully redundant and can be dropped entirely.

We operate on $M p \times M p$ pixel blocks ($32{\times}32$ for Qwen3-VL with ViT patch size $p{=}16$ and merge factor $M{=}2$), each corresponding to one LLM token after patch merging---the smallest independently removable unit. Blocks are scanned in a fixed order; each is predicted from its causal neighbors (those already visited) and omitted if the prediction matches. This formulation naturally provides an \textbf{exact recoverability guarantee} (\S\ref{sec:theory}): a decoder applying the same prediction rule can reconstruct all omitted patches.

Formally, for each patch $P_{r,c}$, PixelPrune omits it when the prediction is close enough:
\begin{equation}
	\text{Omit}(P_{r,c}) = \mathbb{I}\!\bigl[\,\text{dist}(P_{r,c},\, \hat{P}_{r,c}) \le \tau\,\bigr]
\end{equation}
where $\tau \ge 0$ controls matching strictness. The top-left patch $(0,0)$ has no causal neighbor and is always retained as the initial anchor.

\subsubsection{Prediction Strategy}
\label{sec:prediction_strategy}

The predictor $\hat{P}_{r,c}$ determines which spatial redundancy patterns can be captured. We consider three strategies that form a \textbf{1D-to-2D progression} within the same predictive coding framework, differing only in scan order and prediction rule:

\textbf{(i)~Raster} flattens the 2D grid into a 1D sequence in row-major order and predicts each patch from its immediate predecessor: $\hat{P}_i = P_{i-1}$. It captures horizontal continuity within rows but misses cross-row redundancy, as the last patch of row $r$ and the first of row $r{+}1$ are generally far apart spatially.

\textbf{(ii)~Serpentine} also flattens the 2D grid into a 1D sequence but alternates scan direction between rows (left-to-right on even rows, right-to-left on odd rows), using the same single-neighbor predictor. The alternating direction makes the transition between consecutive rows spatially local, partially capturing vertical continuity.

\textbf{(iii)~Pred-2D} scans in raster order but predicts each patch from three causal neighbors using a simple selection rule (analogous to the median-edge predictor in LOCO-I~\citep{weinberger2000loco}). Let $A{=}P_{r,c-1}$, $B{=}P_{r-1,c}$, and $C{=}P_{r-1,c-1}$ denote the left, upper, and upper-left neighbors (Figure~\ref{fig:loco_rule}):
\begin{equation}
	\hat{P}_{r,c} =
	\begin{cases}
		A & \text{if } C = B \neq A \quad \text{(upper and upper-left agree $\Rightarrow$ predict from left)} \\
		B & \text{if } C = A \neq B \quad \text{(left and upper-left agree $\Rightarrow$ predict from upper)} \\
		A & \text{otherwise (default: predict from left)}
	\end{cases}
\end{equation}
Intuitively, the rule picks the neighbor most likely to match the target by checking agreement among the three neighbors, simultaneously exploiting horizontal and vertical redundancy.

All three strategies operate in $O(N)$ time and guarantee exact recoverability (\S\ref{sec:theory}), but differ in compression ratio due to the redundancy patterns each can capture. We compare them in \S\ref{sec:analysis} (Table~\ref{tab:encoding_methods}) and adopt \textbf{Pred-2D} as the default, as it consistently achieves the lowest retain ratio while maintaining comparable accuracy.

\subsubsection{Matching Criterion}
\label{sec:matching_criterion}

The distance function $\text{dist}$ and threshold $\tau$ jointly define when a prediction is ``close enough'' to justify omission. Our default is $\tau{=}0$, i.e., strict pixel equality: a patch is omitted only if it is \textit{identical} to its prediction, directly mirroring lossless compression with zero information loss. This already achieves 30--80\% token reduction on document and GUI images (\S\ref{sec:doc}--\ref{sec:gui_post}). We extend to near-exact matching ($\tau{>}0$) in \S\ref{sec:theory_nearexact}, where we analyze the choice of distance metric and threshold.

\begin{figure}[t]
	\centering
	\includegraphics[width=\linewidth]{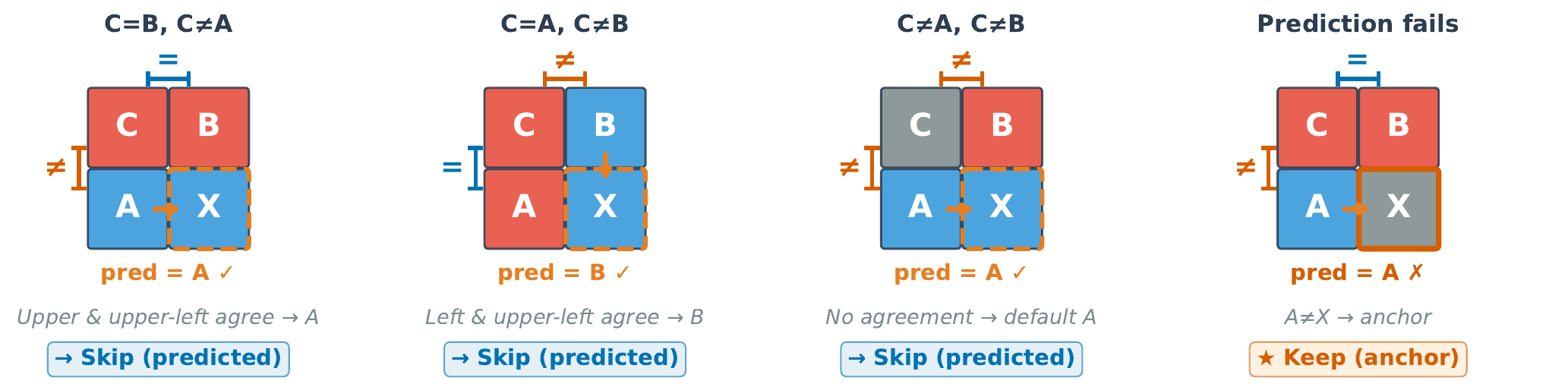}
	\caption{Illustration of PixelPrune's Pred-2D prediction. For each target patch $X$, three causal neighbors---$A$ (left), $B$ (upper), $C$ (upper-left)---determine the predicted patch $\hat{X}$. The rule selects the neighbor most likely to match~$X$: when the upper and upper-left patches agree, the target is more likely to follow the left neighbor, and vice versa. If $X$ matches its prediction, it is omitted; otherwise, it is retained.}
	\label{fig:loco_rule}
\end{figure}

\subsubsection{Position-Preserving Pipeline Integration}
\label{sec:pipeline_integration}

After predictive selection, only the $|\mathcal{S}|$ retained patch--position pairs $\{(P_k, \mathbf{p}_k)\}_{k \in \mathcal{S}}$ enter the subsequent pipeline. Each retained token keeps its \textbf{original 2D grid coordinate}---rather than being re-indexed---so that positional encodings (2D RoPE in the ViT, MRoPE in the LLM) correctly reflect spatial layout throughout the model. Because fewer patches flow through every stage---ViT, Patch Merger, and LLM---the reduction yields end-to-end acceleration across the entire pipeline.

\paragraph{Computational overhead.}
The predictive coding pass requires $O(N)$ time with constant-time operations per patch---negligible relative to the ViT's $O(N^2)$ attention. At $\tau{=}0$ (our default), pixel equality is transitive, so the entire operation can be vectorized on GPU with negligible latency. At $\tau{>}0$, maintaining reconstruction state requires sequential processing (\S\ref{sec:theory_nearexact}).

\subsection{Theoretical Guarantees}
\label{sec:theory}

\begin{proposition}[Exact Reconstruction]
\label{prop:lossless}
For $\tau = 0$, the original patch sequence $\mathcal{P} = \{P_1, \dots, P_N\}$ is exactly recoverable from the compressed representation $\mathcal{C} = \{(P_k, \mathbf{p}_k)\}_{k \in \mathcal{S}}$ via a deterministic decoder that applies the same prediction rule in the same scan order.
\end{proposition}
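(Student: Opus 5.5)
The plan is an induction along the fixed scan order. Let $\pi$ be the scan order: row-major for Raster and Pred-2D, boustrophedon for Serpentine. The decoder builds a grid $\tilde P$ by visiting positions in the order $\pi$. The encoder emits the pairs $(P_k,\mathbf{p}_k)$, and because each retained patch carries its original coordinate, the decoder knows the set $\mathcal{S}$ and the content of every retained patch. At the position $\pi(t)$ it applies the following rule:
\begin{itemize}
	\item if $\pi(t)\in\mathcal{S}$, it sets $\tilde P_{\pi(t)}$ to the transmitted patch;
	\item otherwise it evaluates the same predictor on the already-reconstructed grid and sets $\tilde P_{\pi(t)}=\hat{\tilde P}_{\pi(t)}$.
\end{itemize}

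The claim to prove is the invariant $\tilde P_{\pi(s)}=P_{\pi(s)}$ for all $s\le t$, by induction on $t$. For the base case, the anchor $(0,0)$ is always in $\mathcal{S}$, so it is transmitted and recovered exactly. For the inductive step there are two cases.
\begin{itemize}
	\item \emph{Retained position.} Equality holds trivially, since the patch itself is transmitted.
	\item \emph{Omitted position.} The key observation is that the predictor at $\pi(t)$ reads only causal neighbours, i.e.\ positions visited strictly earlier.
	\begin{itemize}
		\item For Raster and Serpentine, the predictor reads only the predecessor $\pi(t-1)$.
		\item For Pred-2D, it reads $A,B,C$, which precede $(r,c)$ in row-major order.
	\end{itemize}
	By the induction hypothesis those reconstructed neighbours coincide with the originals. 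The Pred-2D selection rule is a deterministic function of $(A,B,C)$: its case distinction uses only equality tests among these three patches. So the decoder reproduces exactly the encoder's prediction, $\hat{\tilde P}_{\pi(t)}=\hat P_{\pi(t)}$. Omission with $\tau=0$ means $\mathrm{dist}(P_{\pi(t)},\hat P_{\pi(t)})\le 0$, and for a genuine distance this forces $P_{\pi(t)}=\hat P_{\pi(t)}$. Hence $\tilde P_{\pi(t)}=P_{\pi(t)}$.
\end{itemize}

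Two details need care; neither is deep.

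\emph{Boundary neighbours.} For Pred-2D, some neighbours are missing in the first row or column. I would fix a convention: treat a missing neighbour as undefined and fall back to the existing neighbour, e.g.\ use $B$ in column $0$ and $A$ in row $0$. The encoder and decoder must share this convention. Without it the predictor is not a function of decoded data alone. I would state the convention explicitly and note that any deterministic convention shared by both sides works.

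\emph{Reference values.} The encoder must compute its predictions from the \emph{original} neighbours. At $\tau=0$ the original and reconstructed neighbours coincide by the invariant, so there is no drift. This is precisely where the $\tau>0$ case breaks, and why the paper notes that $\tau>0$ requires sequential reconstruction state.

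Finally, I would note that $\mathrm{dist}(x,y)=0\Rightarrow x=y$ holds for the norms in question, so the argument covers all three strategies uniformly.
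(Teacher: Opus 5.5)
Your proposal is correct and follows the same route as the paper's proof sketch. Both argue by induction along the scan order, with the always-retained anchor as the base case, and both rely on the fact that the decoder recomputes the identical prediction from already-recovered (hence exact) causal neighbours, so an omitted patch, which at $\tau=0$ equals its prediction, is reproduced exactly; your explicit boundary-neighbour convention and your remark on original versus reconstructed reference values fill in details the sketch leaves implicit without changing the argument.
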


\textit{Proof sketch.} By induction on scan order: the anchor $(0,0)$ is always retained. For each subsequent position, omission occurs only when $P_i = \hat{P}_i$; the decoder recomputes the identical prediction from already-recovered neighbors, yielding exact reconstruction.

These guarantees hold at the pixel level; the ViT receives a shorter sequence, so its internal representations differ from the full-input case. We empirically confirm in \S\ref{sec:layer_analysis} that pixel-level pruning before the ViT achieves comparable accuracy to token reduction applied at various ViT depths.

\subsubsection{Extension to Near-Exact Matching}
\label{sec:theory_nearexact}

Setting $\tau{>}0$ extends PixelPrune beyond exact duplicates, allowing patches that differ by compression artifacts, sensor noise, or natural image gradients to be omitted as well. The prediction rule (\S\ref{sec:prediction_strategy}) remains unchanged; only the acceptance criterion is relaxed from exact equality to bounded distance. A retained patch keeps its original value; an omitted patch is replaced by its predicted value at decode time. At $\tau{=}0$ this distinction is moot, since omitted patches are identical to their predictions. At $\tau{>}0$, however, near-similarity is not transitive---a chain of pairwise-similar patches can drift arbitrarily. To prevent this, the encoder must use the \textbf{predicted value} (not the original) for each omitted patch when it serves as a neighbor for subsequent predictions, keeping encoder and decoder in identical state. This ensures the per-patch error is bounded:

\begin{proposition}[Bounded Reconstruction Error]
\label{prop:bounded}
For $\tau > 0$, when the encoder uses predicted values for omitted neighbors, every omitted patch $i \notin \mathcal{S}$ satisfies $\mathrm{dist}(P_i,\, \hat{P}_i) \le \tau$, where $\hat{P}_i$ is the prediction (which equals the decoder's output for that patch). No error accumulates across patches.
\end{proposition}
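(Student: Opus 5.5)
I will formalize the encoder and decoder as two runs of one deterministic state machine, then prove by induction on scan order that they hold identical state. Let $\tilde{P}_1,\dots,\tilde{P}_N$ denote the \emph{working values} the encoder stores. For $i \in \mathcal{S}$, set $\tilde{P}_i = P_i$. For $i \notin \mathcal{S}$, set $\tilde{P}_i = \hat{P}_i$. Here the prediction $\hat{P}_i$ is computed by the rule of \S\ref{sec:prediction_strategy}, applied to the working values of the causal neighbors of $i$: $A,B,C$ for Pred-2D, or the scan predecessor for Raster and Serpentine. The decoder keeps its own array $\bar{P}_i$. It copies $P_k$ for every $(P_k,\mathbf{p}_k)\in\mathcal{C}$, and at each omitted position it writes the prediction computed from its own earlier entries. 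Since $\mathbf{p}_k$ is stored, the decoder knows $\mathcal{S}$ and thus which positions to fill.

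The key claim is that $\bar{P}_j = \tilde{P}_j$ for all $j$, proved by strong induction in scan order. The anchor $(0,0)$ is always retained, so both arrays hold $P_{0,0}$. For a later index $i$, all causal neighbors precede $i$ in scan order. By hypothesis the two arrays agree on those neighbors. The prediction rule is a deterministic function of the neighbor values, including the equality tests $C=B$ and $C=A$ in Pred-2D, so both sides compute the same $\hat{P}_i$. If $i\in\mathcal{S}$, both store $P_i$. If not, both store $\hat{P}_i$. This closes the induction, and it shows that the decoder's output at an omitted patch is exactly $\hat{P}_i$.

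The error bound then follows from the omission criterion. The encoder omits $i$ only when $\mathrm{dist}(P_i,\hat{P}_i)\le\tau$, where $\hat{P}_i$ is computed from working values. The reconstruction error at $i$ is therefore $\mathrm{dist}(P_i,\bar{P}_i)=\mathrm{dist}(P_i,\hat{P}_i)\le\tau$. Retained patches have zero error. No error accumulates because the bound at each $i$ compares the true $P_i$ directly to the transmitted prediction. It does not chain through $\mathrm{dist}$ to earlier originals, so no triangle inequality or transitivity is needed. I would add a remark contrasting this with an encoder that predicts from original values. In that case $\hat{P}_i$ would differ from the decoder's prediction, and errors could compound along a chain. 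That contrast justifies the ``predicted value'' requirement.

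The main obstacle is the bookkeeping needed to make ``identical state'' precise. First, the prediction must read only working values of strictly earlier positions in the scan. This is immediate for Raster, Serpentine, and Pred-2D, but should be checked explicitly. Second, the boundary rows and columns, where some of $A,B,C$ are missing, must be handled by a fixed deterministic convention shared by encoder and decoder. I would state that convention as part of the prediction rule before starting the induction.
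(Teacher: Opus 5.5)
Your proposal is correct and follows the same route as the paper's proof sketch. Both arguments show that encoder and decoder hold identical state, so they compute the same prediction $\hat{P}_i$, and then read off $\mathrm{dist}(P_i,\hat{P}_i)\le\tau$ from the omission criterion; your version makes explicit the scan-order induction, the causality of each predictor, and the boundary convention, which the paper leaves implicit.
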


\textit{Proof sketch.} Since encoder and decoder maintain identical state (both use predicted values for omitted patches), they compute identical predictions. An omitted patch satisfies $\mathrm{dist}(P_i, \hat{P}_i) \le \tau$ by construction; since the decoder outputs exactly $\hat{P}_i$ for that patch, the error is at most $\tau$.
We ablate the choice of distance metric (MAE vs.\ Max) and threshold in \S\ref{sec:matching}.


\section{Experiments}
\label{sec:experiments}

We organize experiments along two axes: \textbf{domain} (document vs.\ GUI) and \textbf{training regime} (training-free $\to$ knowledge distillation $\to$ training from scratch). Section~\ref{sec:doc} establishes that PixelPrune works training-free on documents across three model scales. Section~\ref{sec:gui_post} extends to GUI, where training-free pruning leaves a gap on grounding tasks; knowledge distillation recovers most benchmarks but not all. Section~\ref{sec:jedi_scratch} investigates this in a controlled from-scratch setting to isolate the effect of token reduction during training. Section~\ref{sec:efficiency} quantifies the resulting efficiency gains. Section~\ref{sec:analysis} ablates key design choices across both domains.

\subsection{Experimental Setup}

\textbf{Models.} We evaluate PixelPrune on the Qwen3-VL~\citep{bai2025qwen3vltechnicalreport} model family, using the 2B, 4B, and 8B Dense variants. Document experiments use all three scales; GUI experiments and ablations use 2B unless otherwise noted.

\textbf{Benchmarks.} We evaluate on two benchmark categories. \textbf{Document understanding} (7 benchmarks): DocVQA~\citep{mathew2021docvqa}, AI2D~\citep{kembhavi2016ai2d}, ChartQA~\citep{masry2022chartqa}, InfoVQA~\citep{mathew2022infographicvqa}, OCRBench~\citep{liu2024ocrbench}, MMLongBench-Doc~\citep{ma2024mmlongbenchdoc}, and olmOCRBench~\citep{poznanski2025olmocr}. \textbf{GUI understanding}: ScreenSpot V2~\citep{wu2025screenspotv2} (Web, Mobile, Desktop) and ScreenSpot Pro~\citep{li2025screenspotpro} (Scientific, Office, OS, Creative, Development, CAD).

\textbf{Baselines.} We compare against four baselines. (1)~\textit{Full} retains all visual tokens, serving as the upper-bound reference. The remaining three operate under the same per-image token budget as PixelPrune. (2)~\textit{Random} randomly selects patches to match the budget. (3)~\textit{ConnComp} builds on ShowUI's~\citep{lin2024showui} connected-component strategy by constructing a 2D adjacency graph over pixel-identical patches and randomly retaining a subset per component to match the token budget. (4)~\textit{Resize} downsamples the input image to match the budget. Although orthogonal to patch selection and combinable with PixelPrune, we include it as an input-side compression baseline.

\textbf{Implementation Details.} All experiments use exact matching ($\tau{=}0$) at the $32{\times}32$ block granularity described in \S\ref{sec:selectpatch}. We use the default Qwen3-VL resolution settings ($\texttt{min\_pixels}{=}256{\times}256$, $\texttt{max\_pixels}{=}4096{\times}4096$). Inference experiments use 8 NVIDIA H20 GPUs with batch size 1 and greedy decoding (\texttt{max\_new\_tokens}$=$16,384 for olmOCRBench, 512 otherwise). Accuracy is evaluated using VLMEvalKit~\citep{vlmevalkit}.

\textbf{GUI Training.} All GUI training experiments share the same data source: the coordinate-containing subset filtered from Jedi~\citep{chen2025jedi}, with absolute coordinates converted to relative format for Qwen3-VL compatibility. Training uses NVIDIA H20 GPUs with DeepSpeed ZeRO-2~\citep{rajbhandari2020zero}, FlashAttention-2~\citep{dao2022flashattention}, gradient checkpointing, and data packing. Individual experiments differ in training scale (steps, GPU count) and loss function, as specified in each subsection. Detailed training configurations are provided in Appendix~\ref{app:training_details}.

\textbf{Retain Ratio.} We report \textit{dataset-level} retention ratios, $\sum_i |\mathcal{S}_i| / \sum_i N_i$, where $\mathcal{S}_i$ and $N_i$ denote the retained and total visual tokens for the $i$-th image, which better reflects actual compute savings than sample-level averages---the latter are biased toward low-resolution images that contribute few tokens.

\subsection{Training-Free Evaluation on Document Understanding}
\label{sec:doc}

Table~\ref{tab:document} presents results on 7 document understanding benchmarks across three model scales.
\textbf{First}, PixelPrune consistently matches or outperforms all baselines under the same per-image token budget. While Resize achieves reasonable accuracy via resolution scaling, PixelPrune matches or exceeds it at every scale (e.g., 73.7 vs.\ 72.8 Avg at 4B), confirming that PixelPrune preserves fine-grained details lost by resolution scaling. Random and ConnComp suffer substantial degradation (e.g., 59.8 and 64.4 vs.\ 73.7 Avg at 4B), showing that accurate redundancy identification is critical for effective token reduction.
\textbf{Second}, retention ratios vary substantially across benchmarks---MMLongBench-Doc and olmOCRBench retain 50--57\% of tokens, while InfoVQA and ChartQA retain 73--77\%---reflecting domain-specific levels of pixel redundancy.
\textbf{Third}, the accuracy gap remains small across all scales: at 8B, PixelPrune achieves 75.3 Avg versus 75.5 for Full---a 0.2-point gap with significant token reduction. These results generalize to Qwen3.5~\citep{qwen2026qwen35} with $\leq$0.8\% Avg gap across three scales (Appendix~\ref{app:qwen35}).

\begin{table*}[t]
	\caption{Training-free results on Document Understanding Benchmarks (\%). Numbers in parentheses show PixelPrune's dataset-level retain ratio; all baselines use the same per-image token budget.}
	\label{tab:document}
	\centering
	\resizebox{\textwidth}{!}{
		\begin{tabular}{ll||c|c|c|c|c|c|c||c}
			\toprule
	\textbf{Model} & \textbf{Method} & \textbf{OCRBench} & \textbf{DocVQA} & \textbf{InfoVQA} & \textbf{ChartQA} & \textbf{AI2D} & \textbf{MML-Doc} & \textbf{olmOCR} & \textbf{Avg} \\
	& & (66.1\%) & (70.1\%) & (76.7\%) & (73.1\%) & (69.2\%) & (50.3\%) & (56.9\%) & \\
	\midrule
	\midrule
	\multirow{5}{*}{Qwen3-VL-2B}
	               & Full            & \color{gray}86.9  & \color{gray}92.7   & \color{gray}72.3   & \color{gray}78.8   & \color{gray}76.6    & \color{gray}28.1       & \color{gray}48.9  & \color{gray}69.2   \\
	               & Resize          & 86.1  & \textbf{92.3}   & \textbf{72.0}   & 76.9   & \textbf{76.5}    & 26.3       & 47.8  & \textbf{68.3}   \\
	               & Random          & 76.0  & 66.8   & 62.9   & 57.3   & 72.2    & 17.0       & 34.7  & 55.3   \\
	               & ConnComp         & 77.2  & 80.5   & 66.1   & 62.8   & 73.9    & 18.4       & 37.2  & 59.4   \\
	               & \textbf{PixelPrune}      & \textbf{87.3}  & 92.0   & 71.5   & \textbf{77.0}   & 75.8    & \textbf{26.5}       & \textbf{48.0}  & \textbf{68.3}   \\
	\midrule
	\multirow{5}{*}{Qwen3-VL-4B}
	               & Full            & \color{gray}88.1  & \color{gray}94.8   & \color{gray}79.5   & \color{gray}82.9   & \color{gray}82.8    & \color{gray}36.7       & \color{gray}54.0  & \color{gray}74.1   \\
	               & Resize          & 87.6  & \textbf{94.7}   & \textbf{79.9}   & 82.2   & 81.6    & 30.9       & 53.1  & 72.8   \\
	               & Random          & 76.9  & 70.2   & 70.4   & 64.6   & 78.5    & 20.0       & 37.9  & 59.8   \\
	               & ConnComp         & 78.2  & 84.4   & 73.5   & 70.0   & 79.2    & 23.8       & 41.5  & 64.4   \\
	               & \textbf{PixelPrune}      & \textbf{88.0}  & \textbf{94.7}   & 79.2   & \textbf{82.4}   & \textbf{81.7}    & \textbf{36.3}       & \textbf{53.8}  & \textbf{73.7}   \\
	\midrule
	\multirow{5}{*}{Qwen3-VL-8B}
	               & Full            & \color{gray}90.6  & \color{gray}95.7   & \color{gray}83.1   & \color{gray}83.4   & \color{gray}82.7    & \color{gray}38.0       & \color{gray}54.7  & \color{gray}75.5   \\
	               & Resize          & 90.2  & \textbf{95.5}   & \textbf{83.2}   & 82.7   & \textbf{82.9}    & 37.1       & 53.9  & 75.1   \\
	               & Random          & 79.9  & 72.8   & 75.5   & 66.0   & 79.4    & 23.5       & 40.0  & 62.4   \\
	               & ConnComp         & 82.2  & 86.3   & 77.5   & 67.8   & 80.7    & 26.9       & 43.2  & 66.4   \\
	               & \textbf{PixelPrune}      & \textbf{90.4}  & \textbf{95.5}   & 82.8   & \textbf{82.9}   & 82.8    & \textbf{38.2}       & \textbf{54.3}  & \textbf{75.3}   \\
		\bottomrule
	\end{tabular}
}
\end{table*}

\subsection{Post-Training Adaptation on GUI Understanding}
\label{sec:gui_post}

We evaluate on the 9 GUI benchmarks with the same four baselines on Qwen3-VL-2B. We compare two settings: \textit{Training-Free} applies PixelPrune directly without fine-tuning; \textit{+KD} recovers the accuracy gap of training-free pruning on position-sensitive grounding tasks via \textbf{knowledge distillation}~\citep{hinton2015distilling}, where the same model's full-token output serves as the teacher: $\mathcal{L} = \mathcal{L}_{\text{CE}} + \mathcal{L}_{\text{KL}}(p_{\text{pred}} \| p_{\text{full}})$.

Table~\ref{tab:gui} presents results. GUI images exhibit higher redundancy than documents, with notably lower retention ratios (Web 39.7\%, Mobile 49.1\%, Desktop 61.9\%). ScreenSpot Pro reveals domain-specific variation: Scientific interfaces are most compressible (26.3\% retention), while Development, CAD, and OS retain 61--66\%. Resize suffers a significant drop on Pro (50.2$\to$43.4 Avg), particularly on Scientific (53.2$\to$37.4), where resolution scaling blurs fine-grained text critical for grounding. In contrast, PixelPrune (+KD) comes within 2.6 points of Full on Pro (47.6 vs.\ 50.2) while matching Full on V2 (90.6 vs.\ 90.6), demonstrating the advantage of adaptive selection for position-sensitive tasks. KD recovery varies across Pro sub-tasks: Office and Scientific recover well, while CAD drops from 40.2 to 30.3, likely due to sparse CAD-style data in the training set.

\begin{table*}[t]
	\caption{Post-training results on GUI Understanding Benchmark (Qwen3-VL-2B). We report grounding accuracy (\%). Numbers in parentheses show the dataset-level retain ratio. We additionally report PixelPrune (+KD) with knowledge distillation.}
	\label{tab:gui}
	\centering
	\resizebox{\textwidth}{!}{
		\begin{tabular}{l||c|c|c|c||c|c|c|c|c|c|c}
			\toprule
			& \multicolumn{4}{c||}{\textbf{ScreenSpot V2}} & \multicolumn{7}{c}{\textbf{ScreenSpot Pro}} \\
			\cmidrule(lr){2-5} \cmidrule(lr){6-12}
			\textbf{Method} & \textbf{Web} & \textbf{Mobile} & \textbf{Desktop} & \textbf{Avg} & \textbf{Dev.} & \textbf{Creative} & \textbf{CAD} & \textbf{Sci.} & \textbf{Office} & \textbf{OS} & \textbf{Avg} \\
			& (39.7\%) & (49.1\%) & (61.9\%) &  & (64.3\%) & (59.0\%) & (65.6\%) & (26.3\%) & (56.4\%) & (61.9\%) &  \\
			\midrule
			Full            & \color{gray}88.1 & \color{gray}92.0 & \color{gray}91.6 & \color{gray}90.6 & \color{gray}45.8 & \color{gray}41.6 & \color{gray}40.2 & \color{gray}53.2 & \color{gray}70.4 & \color{gray}50.0 & \color{gray}50.2 \\
			Resize          & 87.4 & 91.6 & 88.0 & 89.0 & 44.1 & 38.4 & \textbf{37.2} & 37.4 & 62.2 & \textbf{41.3} & 43.4 \\
			Random          & 48.5 & 62.7 & 68.0 & 59.7 & 29.4 & 24.6 & 26.4 & 12.6 & 37.8 & 29.6 & 26.8 \\
			ConnComp          & 69.8 & 82.0 & 75.1 & 75.7 & 35.8 & 27.9 & 28.7 & 22.1 & 47.8 & 31.1 & 32.2 \\
			PixelPrune      & 72.1 & 86.2 & 85.9 & 81.4 & 37.8 & 28.5 & 35.3 & 21.3 & 55.7 & 40.3 & 36.5 \\
			\textbf{PixelPrune (+KD)}& \textbf{87.6} & \textbf{92.4} & \textbf{91.6} & \textbf{90.6} & \textbf{50.5} & \textbf{41.9} & 30.3 & \textbf{52.8} & \textbf{68.7} & \textbf{41.3} & \textbf{47.6} \\
			\bottomrule
		\end{tabular}
	}
\end{table*}

\subsection{Training from Scratch on GUI Understanding}
\label{sec:jedi_scratch}

To isolate the effect of token reduction during training---independent of the pre-trained model's capability and training data in Section~\ref{sec:gui_post}---we train a 2B-scale model from scratch on the same Jedi data, replacing the language model of Qwen3-VL-2B-Instruct with Qwen3-1.7B-Instruct and standard cross-entropy loss (no KD). We compare three configurations: \textbf{Full$\to$Full} (standard training and inference), \textbf{PixelPrune$\to$PixelPrune} (compressed tokens for both), and \textbf{PixelPrune$\to$Full} (compressed training, full inference). Training efficiency is reported in Section~\ref{sec:training_eff}.

Table~\ref{tab:jedi_from_scratch} reveals two findings. First, \textbf{PixelPrune$\to$PixelPrune} achieves competitive accuracy, showing that token reduction during training does not significantly degrade performance (detailed convergence curves in Appendix~\ref{app:scratch_convergence}). Second, \textbf{PixelPrune$\to$Full} achieves comparable or slightly better performance than Full$\to$Full (SSv2 Avg: 85.1 vs.\ 81.8), demonstrating cross-mode generalization. We hypothesize that training with compressed tokens may encourage the model to extract information from salient patches, though the effect is small and could partly reflect variance; we leave a rigorous investigation to future work. This cross-mode compatibility offers deployment flexibility: a single checkpoint can serve low-latency (with PixelPrune) or maximum-accuracy (with full tokens) scenarios.

\begin{table*}[t]
	\caption{Training from scratch results on GUI Understanding (Qwen3-VL-2B). ``PixelPrune $\to$ Full'' = trained with PixelPrune, evaluated with full tokens.}
	\label{tab:jedi_from_scratch}
	\centering
	\resizebox{\textwidth}{!}{
		\begin{tabular}{l|c||c|c|c|c||c|c|c|c|c|c|c}
			\toprule
			& & \multicolumn{4}{c||}{\textbf{ScreenSpot V2}} & \multicolumn{7}{c}{\textbf{ScreenSpot Pro}} \\
			\cmidrule(lr){3-6} \cmidrule(lr){7-13}
			\textbf{Train} & \textbf{Infer} & \textbf{Web} & \textbf{Mobile} & \textbf{Desktop} & \textbf{Avg} & \textbf{Dev.} & \textbf{Creative} & \textbf{CAD} & \textbf{Sci.} & \textbf{Office} & \textbf{OS} & \textbf{Avg} \\
			\midrule
			Full       & Full       & 79.9 & 81.6 & 83.8 & 81.8 & \textbf{31.8} & 26.7 & 15.3 & \textbf{35.0} & 49.6 & \textbf{30.6} & 31.5 \\
			PixelPrune & PixelPrune & 79.9 & 85.6 & \textbf{88.9} & 84.8 & 30.8 & 27.3 & 15.7 & 34.6 & \textbf{52.6} & 27.6 & 31.4 \\
			PixelPrune & Full       & \textbf{80.6} & \textbf{86.0} & 88.6 & \textbf{85.1} & 30.4 & \textbf{27.9} & \textbf{16.1} & \textbf{35.0} & 50.4 & 29.6 & \textbf{31.6} \\
			\bottomrule
		\end{tabular}
	}
\end{table*}

\subsection{Efficiency Analysis}
\label{sec:efficiency}

\subsubsection{Inference Efficiency}

Table~\ref{tab:efficiency} reports prefill-stage efficiency on two benchmarks that span opposite ends of the spectrum: ScreenSpot Pro Scientific (single-page, 26.2\% retention) and MMLongBench-Doc (multi-page, 50.3\% retention with up to dozens of pages per sample). We measure on 8 NVIDIA H20 GPUs (batch size 1), discarding the first sample per GPU to exclude warmup. Across this range, PixelPrune achieves 3.0--4.2$\times$ TTFT speedup, 3.5--6.6$\times$ FLOPs reduction, and 45--63\% KV cache memory savings, while per-sample selector overhead remains negligible (see Selector column). These gains come from reduced computation across all three stages---ViT encoder, Patch Merger, and LLM---with the ViT attention saving being roughly quadratic in the pruning ratio (detailed derivations in Appendix~\ref{app:flops}).

\begin{table*}[t]
	\caption{Inference efficiency on Qwen3-VL-2B (batch size 1, H20 GPU). MML-Doc: MMLongBench-Doc; SS-Pro Sci.: ScreenSpot Pro Scientific. Selector: per-sample overhead of the pixel-level selector. Speedup factors are relative to the Full baseline.}
	\label{tab:efficiency}
	\centering
	\resizebox{\textwidth}{!}{
	\begin{tabular}{ll||cccccc|cc}
		\toprule
		& & \textbf{Selector} & \textbf{Retain} & \textbf{FLOPs} & \textbf{ViT Lat.} & \textbf{TTFT} & \textbf{KV Cache} & \textbf{FLOPs} & \textbf{TTFT} \\
		\textbf{Dataset} & \textbf{Method} & (ms) & (\%) & (TFLOPs) & (ms) & (ms) & (GB) & Speedup & Speedup \\
		\midrule
		\multirow{2}{*}{MML-Doc}
		& Full       & -- & 100.0 & 2694 & 4898 & 18477 & 13.2 & 1.0$\times$ & 1.0$\times$ \\
		& PixelPrune & 19.1 &  50.3 &  762 & 2143 &  6114 &  7.2 & \textbf{3.5$\times$} & \textbf{3.0$\times$} \\
		\midrule
		\multirow{2}{*}{SS-Pro Sci.}
		& Full       & -- & 100.0 & 70 & 625 & 833 & 0.8 & 1.0$\times$ & 1.0$\times$ \\
		& PixelPrune & 1.2 &  26.2 & 11 & 115 & 198 & 0.3 & \textbf{6.6$\times$} & \textbf{4.2$\times$} \\
		\bottomrule
	\end{tabular}
	}
\end{table*}

\subsubsection{Training Efficiency}
\label{sec:training_eff}

Because PixelPrune operates at the input level, it is fully compatible with standard training optimizations such as FlashAttention, enabling direct application during training. Table~\ref{tab:training_eff} reports the efficiency measured during the from-scratch experiment in Section~\ref{sec:jedi_scratch}. With 40.8\% average retention, PixelPrune achieves \textbf{2.0$\times$} speedup on both forward and backward passes, cutting wall-clock time from 49.1\,h to 25.3\,h (\textbf{1.9$\times$}) and peak GPU memory by \textbf{33.6\%} (57.8$\to$38.4\,GB). The modest gap between per-step and wall-clock speedup reflects fixed-cost components (data loading, gradient synchronization) that are unaffected by token count. The 19.4\,GB memory reduction can be traded for larger batch sizes or longer sequences on the same hardware.

\begin{table*}[t]
	\caption{Training efficiency measured during the from-scratch experiment in Section~\ref{sec:jedi_scratch}. Speedup is relative to Full.}
	\label{tab:training_eff}
	\centering
	\small
	\begin{tabular}{l||ccccc}
		\toprule
		\textbf{Method} & \textbf{Retain} & \textbf{Forward (s)} & \textbf{Backward (s)} & \textbf{Peak Mem (GB)} & \textbf{Wall Time (h)} \\
		\midrule
		Full       & 100.0\%  & 7.9 & 35.3 & 57.8 & 49.1 \\
		PixelPrune & 40.8\% & 3.9 & 18.0 & 38.4 & 25.3 \\
		\midrule
		\textit{Speedup} & -- & \textbf{2.0$\times$} & \textbf{2.0$\times$} & \textbf{$\downarrow$33.6\%} & \textbf{1.9$\times$} \\
		\bottomrule
	\end{tabular}
\end{table*}

\subsection{Ablation Studies}
\label{sec:analysis}

Unless otherwise noted, all ablations use Qwen3-VL-2B on six benchmarks: DocVQA, InfoVQA, and ChartQA (document, training-free) and ScreenSpot V2 Web/Mobile/Desktop (GUI, fine-tuned at a smaller scale than \S\ref{sec:gui_post}; see Appendix~\ref{app:training_details}).

\subsubsection{Comparison of Prediction Strategies}

We compare Raster (1D row-major scan), Serpentine (alternating row direction), and Pred-2D (neighbor-selective 2D prediction). All guarantee exact recoverability with $O(N)$ complexity. Table~\ref{tab:encoding_methods} reports retain ratio and accuracy.

The strategies form a 1D-to-2D progression. Serpentine improves over Raster by making row transitions spatially local, yielding modest retention reductions (e.g., ChartQA $76.7{\to}74.9$\%). Pred-2D further lowers retention by additionally exploiting vertical redundancy, with the largest gains on GUI benchmarks where rectangular panels span multiple rows (e.g., GUI-Web $42.5{\to}39.7$\%, GUI-Desktop $64.4{\to}61.9$\%). Despite the progressive compression gains, all three strategies achieve comparable accuracy, confirming that the predictor design primarily affects compression efficiency rather than downstream task performance. We therefore adopt \textbf{Pred-2D} as the default for the best compression--accuracy trade-off.

\begin{table*}[t]
	\caption{Comparison of prediction strategies.}
	\label{tab:encoding_methods}
	\centering
	\resizebox{\textwidth}{!}{
	\begin{tabular}{l|cc|cc|cc|cc|cc|cc}
		\toprule
		& \multicolumn{6}{c|}{\textbf{Document}} & \multicolumn{6}{c}{\textbf{GUI}} \\
		\cmidrule(lr){2-7} \cmidrule(lr){8-13}
		& \multicolumn{2}{c|}{\textbf{DocVQA}} & \multicolumn{2}{c|}{\textbf{InfoVQA}} & \multicolumn{2}{c|}{\textbf{ChartQA}} & \multicolumn{2}{c|}{\textbf{Web}} & \multicolumn{2}{c|}{\textbf{Mobile}} & \multicolumn{2}{c}{\textbf{Desktop}} \\
		\cmidrule(lr){2-3} \cmidrule(lr){4-5} \cmidrule(lr){6-7} \cmidrule(lr){8-9} \cmidrule(lr){10-11} \cmidrule(lr){12-13}
		\textbf{Method} & \textbf{Ret.} & \textbf{Acc.} & \textbf{Ret.} & \textbf{Acc.} & \textbf{Ret.} & \textbf{Acc.} & \textbf{Ret.} & \textbf{Acc.} & \textbf{Ret.} & \textbf{Acc.} & \textbf{Ret.} & \textbf{Acc.} \\
		\midrule
		Raster              & 70.1 & 92.0 & 77.8 & 71.4 & 76.7 & \textbf{77.6} & 42.5 & 87.2 & 50.5 & 91.8 & 64.4 & \textbf{90.1} \\
		Serpentine          & \textbf{70.0} & \textbf{92.1} & 77.4 & 71.2 & 74.9 & 77.1 & 41.8 & \textbf{88.6} & 50.1 & 91.2 & 64.1 & 88.9 \\
		\textbf{Pred-2D}  & 70.1 & 92.0 & \textbf{76.7} & \textbf{71.5} & \textbf{73.1} & 77.0 & \textbf{39.7} & 86.5 & \textbf{49.1} & \textbf{92.2} & \textbf{61.9} & \textbf{90.1} \\
		\bottomrule
	\end{tabular}
	}
\end{table*}

\subsubsection{Effect of Matching Threshold}
\label{sec:matching}

We compare two distance metrics---MAE (mean absolute error) and Max (maximum per-pixel difference), both normalized to $[0,1]$---across several thresholds $\tau$ (Table~\ref{tab:nearexact}).

At $\tau{=}0$ (exact matching), accuracy is already competitive with all $\tau{>}0$ configurations on most benchmarks. When $\tau{>}0$, Max consistently outperforms MAE, which degrades rapidly as it averages per-pixel differences and masks critical outliers (e.g., ChartQA 76.6\% vs 43.5\% at $\tau{=}0.05$). Max maintains strong accuracy even at high compression: at $\tau{=}0.10$, DocVQA retention drops to 45.1\% with less than 1\% accuracy loss. ChartQA is the most sensitive to $\tau$, as chart understanding requires spatial layout reasoning beyond pure text recognition. This suggests that fine-tuning with PixelPrune may be needed for such tasks under aggressive pruning. We further evaluate on general-domain images in Appendix~\ref{app:general}, covering both exact and near-exact matching.

\subsubsection{Token Reduction at Different ViT Depths}
\label{sec:layer_analysis}

We ablate the pruning position within the Qwen3-VL-2B ViT encoder (24 blocks; DeepStack at blocks 5, 11, 17). All settings apply the same retain mask at different depths---after the 0th, 4th, 17th, or final ViT block, or before the ViT (PixelPrune default)---so the retained tokens share the same spatial positions but carry features from different depths.

\begin{wraptable}{r}{0.48\textwidth}
	\vspace{-2em}
	\caption{Token reduction at different ViT depths.}
	\label{tab:layer_pruning}
	\centering
	\footnotesize
	\setlength{\tabcolsep}{3pt}
	\begin{tabular}{l|ccc|ccc}
		\toprule
		& \multicolumn{3}{c|}{\textbf{Doc.}} & \multicolumn{3}{c}{\textbf{GUI}} \\
		\cmidrule(lr){2-4} \cmidrule(lr){5-7}
		\textbf{Depth} & \textbf{Doc} & \textbf{Info} & \textbf{Chart} & \textbf{Web} & \textbf{Mob.} & \textbf{Desk.} \\
		\midrule
		After ViT   & \textbf{92.2} & 71.0 & 76.8 & 88.1 & 92.2 & 90.4 \\
		After 17th  & 91.9 & 71.1 & 76.1 & 86.7 & 92.4 & \textbf{92.2} \\
		After 4th   & \textbf{92.2} & 71.4 & \textbf{77.0} & \textbf{89.9} & 92.6 & 89.8 \\
		After 0th   & 92.1 & 71.3 & 76.9 & 86.7 & \textbf{93.0} & 90.4 \\
		\midrule
		\textbf{Before ViT} & 92.0 & \textbf{71.5} & \textbf{77.0} & 86.5 & 92.2 & 90.1 \\
		\bottomrule
	\end{tabular}
	\vspace{-2em}
\end{wraptable}

Table~\ref{tab:layer_pruning} reveals two findings.
\textbf{First}, accuracy is largely insensitive to pruning depth: no single layer consistently dominates, and PixelPrune (before ViT) performs on par with all post-ViT variants despite the latter preserving richer learned features for the kept tokens. For structured images, \emph{which} patches to keep matters more than \emph{where} in the network they are pruned.
\textbf{Second}, earlier pruning yields greater computational savings. After-ViT/17th/4th/0th pruning saves 0\%/29\%/83\%/96\% of ViT FLOPs for removed tokens, respectively; only PixelPrune (before ViT) saves 100\%, delivering full-pipeline acceleration.

\begin{table*}[t]
	\caption{Distance metrics and thresholds. Document = training-free; GUI = +KD model from Section~\ref{sec:gui_post}. $\tau$ is normalized to $[0,1]$.}
	\label{tab:nearexact}
	\centering
	\resizebox{\textwidth}{!}{
	\begin{tabular}{ll|cc|cc|cc|cc|cc|cc}
		\toprule
		& & \multicolumn{6}{c|}{\textbf{Document}} & \multicolumn{6}{c}{\textbf{GUI}} \\
		\cmidrule(lr){3-8} \cmidrule(lr){9-14}
		& & \multicolumn{2}{c|}{\textbf{DocVQA}} & \multicolumn{2}{c|}{\textbf{InfoVQA}} & \multicolumn{2}{c|}{\textbf{ChartQA}} & \multicolumn{2}{c|}{\textbf{Web}} & \multicolumn{2}{c|}{\textbf{Mobile}} & \multicolumn{2}{c}{\textbf{Desktop}} \\
		\cmidrule(lr){3-4} \cmidrule(lr){5-6} \cmidrule(lr){7-8} \cmidrule(lr){9-10} \cmidrule(lr){11-12} \cmidrule(lr){13-14}
		\textbf{Distance} & \textbf{$\tau$} & \textbf{Ret.} & \textbf{Acc.} & \textbf{Ret.} & \textbf{Acc.} & \textbf{Ret.} & \textbf{Acc.} & \textbf{Ret.} & \textbf{Acc.} & \textbf{Ret.} & \textbf{Acc.} & \textbf{Ret.} & \textbf{Acc.} \\
		\midrule
		\multicolumn{2}{l|}{\color{gray}Full (100\%)} & \color{gray}100 & \color{gray}92.7 & \color{gray}100 & \color{gray}72.3 & \color{gray}100 & \color{gray}78.8 & \color{gray}100 & \color{gray}88.1 & \color{gray}100 & \color{gray}92.0 & \color{gray}100 & \color{gray}91.6 \\
		\multicolumn{2}{l|}{Exact ($\tau$=0)} & 70.1 & 92.0 & 76.7 & 71.5 & 73.1 & 77.0 & 39.7 & 87.6 & 49.1 & 92.4 & 61.9 & 91.6 \\
		\midrule
		\multirow{4}{*}{MAE}
		& 0.005 & 61.8 & 92.0 & 66.7 & 70.4 & 53.1 & 74.4 & 36.0 & 87.4 & 35.9 & 92.2 & 42.7 & 91.6 \\
		& 0.020 & 37.0 & 90.9 & 56.9 & 68.4 & 35.0 & 69.4 & 31.3 & 86.7 & 27.1 & 91.2 & 30.7 & 88.9 \\
		& 0.050 & 28.6 & 86.8 & 47.1 & 65.8 & 23.3 & 43.5 & 25.7 & 81.2 & 20.6 & 88.2 & 23.3 & 82.0 \\
		& 0.100 & 21.2 & 69.5 & 35.1 & 55.4 & 13.2 & 25.8 & 17.8 & 63.2 & 14.4 & 79.6 & 14.9 & 59.6 \\
		\midrule
		\multirow{4}{*}{Max}
		& 0.005 & 69.8 & \textbf{92.1} & 75.5 & 71.2 & 72.6 & \textbf{77.2} & 39.2 & 87.9 & 44.7 & 92.4 & 53.9 & 91.0 \\
		& 0.020 & 66.1 & 91.8 & 71.7 & \textbf{71.6} & 69.3 & 76.4 & 38.0 & 87.9 & 38.2 & \textbf{92.6} & 45.7 & 91.6 \\
		& 0.050 & 53.1 & 91.8 & 68.0 & 71.1 & 62.2 & 76.6 & 37.1 & \textbf{88.3} & 34.4 & 92.2 & 40.1 & 91.9 \\
		& 0.100 & 45.1 & 91.7 & 62.4 & 70.5 & 56.8 & 74.8 & 35.8 & 87.9 & 30.1 & 92.0 & 35.9 & \textbf{93.1} \\
		\bottomrule
	\end{tabular}
	}
\end{table*}

\section{Conclusion}
\label{sec:conclusion}
We presented PixelPrune, a pixel-level adaptive patch pruning method that removes redundant visual tokens \emph{before} the ViT encoder via 2D predictive coding. On document benchmarks, PixelPrune matches full-token accuracy with 23--50\% token reduction across three model scales without any training; on GUI benchmarks, light post-training and training-from-scratch settings recover the accuracy gap while delivering significant training acceleration. Across document and GUI benchmarks, PixelPrune achieves up to 4.2$\times$ inference speedup, 1.9$\times$ training acceleration, and 33.6\% memory reduction, while adapting gracefully to general-domain images (Appendix~\ref{app:general}).

\bibliographystyle{unsrtnat}
\bibliography{references}

\appendix
\pdfbookmark[0]{Appendices}{appendices}   
\bookmarksetup{level=1}                   
\section{Pixel-Level Redundancy and Resolution Statistics}
\label{app:redundancy}

Table~\ref{tab:redundancy_stats} summarizes the pixel-level patch redundancy and resolution characteristics of all benchmarks. We divide each image into non-overlapping $32{\times}32$ blocks and count the number of \textbf{pixel-unique} blocks (those whose pixel content appears exactly once in the image) versus \textbf{duplicate} blocks (those sharing identical pixel content with at least one other block in the same image). \textbf{Duplicate Ratio} is the dataset-level fraction of non-unique blocks, $1 - \sum_i U_i / \sum_i N_i$, where $U_i$ and $N_i$ are the unique and total block counts for the $i$-th image. This global dedup statistic characterizes the overall pixel-level redundancy of each benchmark; PixelPrune's actual retain ratio (\S\ref{sec:experiments}) is higher because its causal prediction can only exploit redundancy along the scan order.

We also report the \textbf{Mean Resolution} (width$\times$height in pixels) of each benchmark after Qwen3-VL's default resolution preprocessing (\texttt{min\_pixels}=256$\times$256, \texttt{max\_pixels}=4096$\times$4096). Resolution directly affects the number of visual tokens: higher-resolution benchmarks produce more patches and thus more opportunities for redundancy, but the relationship is not strictly monotonic---content density matters as well.

On average across benchmarks, GUI images exhibit higher duplication (51.4\% mean across 9 benchmarks) than documents (39.7\% mean across 7 benchmarks), driven by large solid-color regions such as backgrounds, panels, and status bars. The range spans from SSPro-Scientific (77.6\%, paper-like backgrounds) to SSPro-CAD (40.3\%, dense content). Among documents, MMLongBench-Doc leads (52.4\%, multi-page white margins) while InfoVQA is lowest (28.6\%, dense infographics).

\begin{table*}[ht!]
	\caption{Pixel-level patch redundancy and resolution statistics (patch size $32{\times}32$, exact match). Duplicate Ratio = fraction of non-unique patches (global dedup within each image).}
	\label{tab:redundancy_stats}
	\centering
	\resizebox{\textwidth}{!}{
		\begin{tabular}{l||r|c|r}
			\toprule
			\textbf{Benchmark} & \textbf{\#Images} & \textbf{Mean Resolution} & \textbf{Duplicate Ratio (\%)} \\
			\midrule
			\multicolumn{4}{c}{\textit{Document Understanding}} \\
			\midrule
			OCRBench        & 1{,}000 &  615${\times}$732  & 38.5 \\
			DocVQA          & 1{,}286 & 1775${\times}$2082 & 33.4 \\
			InfoVQA         &    500  & 1141${\times}$2929 & 28.6 \\
			ChartQA         & 2{,}500 &  768${\times}$584  & 40.8 \\
			AI2D            & 1{,}204 &  609${\times}$486  & 37.7 \\
			MMLongBench-Doc & 5{,}784 & 1284${\times}$1405 & 52.4 \\
			olmOCRBench     & 1{,}403 & 1556${\times}$2007 & 46.6 \\
			\midrule
			\multicolumn{4}{c}{\textit{GUI Understanding}} \\
			\midrule
			SSv2-Web         & 251 & 2397${\times}$1348 & 63.6 \\
			SSv2-Mobile      & 248 & 1387${\times}$2339 & 56.2 \\
			SSv2-Desktop     & 231 & 1369${\times}$907  & 42.3 \\
			SSPro-Dev.       & 299 & 3624${\times}$2119 & 43.0 \\
			SSPro-Creative   & 341 & 3040${\times}$1616 & 46.6 \\
			SSPro-CAD        & 261 & 3448${\times}$1190 & 40.3 \\
			SSPro-Scientific & 254 & 2716${\times}$1616 & 77.6 \\
			SSPro-Office     & 230 & 2980${\times}$1778 & 51.0 \\
			SSPro-OS         & 196 & 3926${\times}$2118 & 41.7 \\
			\bottomrule
		\end{tabular}
	}
\end{table*}

\section{Training Details}
\label{app:training_details}

Post-training KD (\S\ref{sec:gui_post}) uses 32 GPUs $\times$ 36 samples/GPU for 2{,}000 steps. Training from scratch (\S\ref{sec:jedi_scratch}) halves the per-GPU batch size to 18 to avoid OOM under the Full configuration (which retains all visual tokens), and accordingly doubles the steps to 4{,}000 to match the total sample budget. Ablations on prediction strategy (\S\ref{sec:analysis}) and ViT depth (\S\ref{sec:layer_analysis}) each use 8 GPUs $\times$ 36 samples/GPU for 1{,}000 steps.

\section{Cross-Architecture Validation: Qwen3.5}
\label{app:qwen35}

To verify generalization beyond Qwen3-VL, we evaluate PixelPrune on Qwen3.5 (training-free, document benchmarks). Qwen3.5 shares the NaViT-style ViT with separate spatial positional encodings, making it directly compatible with PixelPrune. Notably, Qwen3.5 employs a \textbf{hybrid attention} mechanism in the LLM: it interleaves standard full attention layers with linear attention layers. The linear attention layers lack explicit positional encoding injection and rely only on naturally decaying temporal ordering for sequence awareness. This architectural difference tests whether PixelPrune's position-preserving design remains effective when the LLM's positional signal is weaker and partially implicit.

Table~\ref{tab:qwen35_doc} shows that PixelPrune maintains small accuracy gaps across all scales ($\leq$0.8\% Avg), confirming generalization across NaViT-based architectures.

\begin{table*}[t]
	\caption{Training-free results on Document Understanding Benchmarks using Qwen3.5. Format follows Table~\ref{tab:document}. Numbers in parentheses show the dataset-level retain ratio of PixelPrune.}
	\label{tab:qwen35_doc}
	\centering
	\resizebox{\textwidth}{!}{
		\begin{tabular}{ll||c|c|c|c|c|c|c||c}
			\toprule
	\textbf{Model} & \textbf{Method} & \textbf{OCRBench} & \textbf{DocVQA} & \textbf{InfoVQA} & \textbf{ChartQA} & \textbf{AI2D} & \textbf{MML-Doc} & \textbf{olmOCR} & \textbf{Avg} \\
	& & (66.1\%) & (70.1\%) & (76.7\%) & (73.1\%) & (69.2\%) & (50.3\%) & (56.9\%) & \\
	\midrule
	\midrule
	\multirow{2}{*}{Qwen3.5-2B}
	               & Full            & \color{gray}87.3 & \color{gray}93.5 & \color{gray}75.4 & \color{gray}78.8 & \color{gray}79.6 & \color{gray}32.4 & \color{gray}49.9 & \color{gray}71.0 \\
	               & \textbf{PixelPrune}      & 87.9 & 92.5 & 74.3 & 77.9 & 78.7 & 31.6 & 48.7 & 70.2 \\
	\midrule
	\multirow{2}{*}{Qwen3.5-4B}
	               & Full            & \color{gray}86.6 & \color{gray}95.4 & \color{gray}81.6 & \color{gray}85.1 & \color{gray}84.9 & \color{gray}45.4 & \color{gray}53.5 & \color{gray}76.1 \\
	               & \textbf{PixelPrune}      & 86.6 & 95.3 & 80.7 & 84.7 & 84.7 & 45.4 & 53.6 & 75.9 \\
	\midrule
	\multirow{2}{*}{Qwen3.5-9B}
	               & Full            & \color{gray}88.3 & \color{gray}95.7 & \color{gray}84.4 & \color{gray}86.2 & \color{gray}86.7 & \color{gray}53.1 & \color{gray}54.1 & \color{gray}78.4 \\
	               & \textbf{PixelPrune}      & 88.0 & 95.5 & 83.7 & 85.2 & 86.5 & 51.9 & 53.3 & 77.7 \\
		\bottomrule
	\end{tabular}
}
\end{table*}

\section{Training from Scratch: Detailed Convergence}
\label{app:scratch_convergence}

Table~\ref{tab:scratch_convergence} reports intermediate evaluation results during training from scratch (Section~\ref{sec:jedi_scratch}), comparing \textbf{Full} and \textbf{PixelPrune} configurations. Each configuration uses the same visual token setting for both training and inference: Full trains and evaluates with all tokens, while PixelPrune trains and evaluates with compressed tokens.

Both configurations converge comparably, reaching similar accuracy by step 2{,}400 and plateauing around step 2{,}400--3{,}200. PixelPrune achieves slightly higher final performance (mean 58.1 vs.\ 56.6 at step 4{,}000), possibly because reduced token count provides implicit regularization.

\begin{table*}[t]
	\caption{Training from scratch convergence: intermediate evaluation results every 400 steps. We report grounding accuracy (\%) on ScreenSpot V2 (SSv2) and ScreenSpot Pro (SSPro). ``Full'' = trained and evaluated with all tokens; ``PixelPrune'' = trained and evaluated with compressed tokens.}
	\label{tab:scratch_convergence}
	\centering
	\resizebox{\textwidth}{!}{
		\begin{tabular}{l|r||c|c|c|c||c|c|c|c|c|c|c}
			\toprule
			& & \multicolumn{4}{c||}{\textbf{ScreenSpot V2}} & \multicolumn{7}{c}{\textbf{ScreenSpot Pro}} \\
			\cmidrule(lr){3-6} \cmidrule(lr){7-13}
			\textbf{Mode} & \textbf{Step} & \textbf{Web} & \textbf{Mobile} & \textbf{Desktop} & \textbf{Avg} & \textbf{Dev.} & \textbf{Cre.} & \textbf{CAD} & \textbf{Sci.} & \textbf{Office} & \textbf{OS} & \textbf{Avg} \\
			\midrule
			\multirow{10}{*}{Full}
			& 400  & 66.1 & 78.2 & 75.1 & 73.2 & 14.7 & 14.1 &  5.0 & 20.5 & 22.6 & 10.7 & 14.6 \\
			& 800  & 74.4 & 84.4 & 82.3 & 80.4 & 22.4 & 19.6 & 12.6 & 24.4 & 29.1 & 15.3 & 20.6 \\
			& 1200 & 78.0 & 87.2 & 85.9 & 83.7 & 24.7 & 22.6 & 14.9 & 29.5 & 40.9 & 24.5 & 26.2 \\
			& 1600 & 78.7 & 88.4 & 85.0 & 84.1 & 29.8 & 28.2 & 13.4 & 31.5 & 42.6 & 25.5 & 28.5 \\
			& 2000 & 78.7 & 84.4 & 83.5 & 82.2 & 25.1 & 23.5 & 14.2 & 27.6 & 40.9 & 25.0 & 26.0 \\
			& 2400 & 79.4 & 84.4 & 87.4 & 83.8 & 29.4 & 24.9 & 16.9 & 34.3 & 48.3 & 29.1 & 30.5 \\
			& 2800 & 80.1 & 78.6 & 85.6 & 81.5 & 32.1 & 26.7 & 15.3 & 35.4 & 49.1 & 30.6 & 31.5 \\
			& 3200 & 79.9 & 84.8 & 85.0 & 83.2 & 31.8 & 26.1 & 16.5 & 33.9 & 49.1 & 27.6 & 30.8 \\
			& 3600 & 80.1 & 83.4 & 85.0 & 82.9 & 31.1 & 27.3 & 16.1 & 34.3 & 48.7 & 28.6 & 31.0 \\
			& 4000 & 79.9 & 81.6 & 83.8 & 81.8 & 31.8 & 26.7 & 15.3 & 35.0 & 49.6 & 30.6 & 31.5 \\
			\midrule
			\multirow{10}{*}{PixelPrune}
			& 400  & 61.1 & 81.4 & 76.3 & 73.0 & 13.4 & 10.9 &  4.6 & 15.7 & 17.8 & 11.2 & 12.3 \\
			& 800  & 72.3 & 86.0 & 82.3 & 80.2 & 21.7 & 19.1 &  7.7 & 26.4 & 29.6 & 13.8 & 19.7 \\
			& 1200 & 75.5 & 87.6 & 84.1 & 82.4 & 22.4 & 20.2 & 13.0 & 29.5 & 36.1 & 21.9 & 23.9 \\
			& 1600 & 76.2 & 87.0 & 87.1 & 83.5 & 27.4 & 26.4 & 14.9 & 33.9 & 42.6 & 26.0 & 28.5 \\
			& 2000 & 78.3 & 87.2 & 86.8 & 84.1 & 27.4 & 23.5 & 14.6 & 30.7 & 41.7 & 28.1 & 27.7 \\
			& 2400 & 79.6 & 85.2 & 88.3 & 84.4 & 30.1 & 27.6 & 15.7 & 34.6 & 50.9 & 27.0 & 31.0 \\
			& 2800 & 79.4 & 86.4 & 86.8 & 84.2 & 27.8 & 27.9 & 14.2 & 34.3 & 50.0 & 29.1 & 30.5 \\
			& 3200 & 80.8 & 87.4 & 87.7 & 85.3 & 31.4 & 27.3 & 14.6 & 34.3 & 51.3 & 28.6 & 31.2 \\
			& 3600 & 79.6 & 85.8 & 88.0 & 84.5 & 30.1 & 28.4 & 16.1 & 34.3 & 50.4 & 30.1 & 31.6 \\
			& 4000 & 79.9 & 85.6 & 88.9 & 84.8 & 30.8 & 27.3 & 15.7 & 34.6 & 52.6 & 27.6 & 31.4 \\
			\bottomrule
		\end{tabular}
	}
\end{table*}

\section{Detailed FLOPs Analysis}
\label{app:flops}

We derive the FLOPs formula for the Qwen3-VL architecture, which comprises three stages: ViT encoding, Patch Merger (including DeepStack), and LLM processing. For an input image with $N$ patches ($N = (H/p) \times (W/p)$ for ViT patch size $p$) and $T$ text tokens:

\begin{align}
\text{FLOPs}_{\text{total}} &=
\underbrace{
    L_v \!\bigl[\, N\!\left(8D_v^2 + 4D_v D_{f,v}\right) + 4N^2 D_v \,\bigr]
}_{\text{(a) ViT Encoder}} \notag\\
&\quad+\;\underbrace{
    L_m \,\frac{N}{M^2}\!\left(2D_{\text{in}}^2 + 2 D_{\text{in}} D_l\right)
}_{\text{(b) Patch Merger \& DeepStack}}
\;+\;\underbrace{
    L_l \!\bigl[\, N_l C_l + 4 N_l^2 D_l \,\bigr]
}_{\text{(c) LLM}}
\label{eq:flops_total}
\end{align}

where $N_l = N/M^2 + T$ is the total LLM sequence length ($M{=}2$ is the spatial merge factor, $T$ is the number of text tokens), $D_{f,v}$ is the ViT FFN dimension, $D_{\text{in}} = M^2 \cdot D_v$ is the Patch Merger input dimension, $L_m = 1 + |\mathcal{I}_{\text{ds}}|$ accounts for the main merger plus DeepStack mergers, and $C_l$ is the per-token linear cost of the LLM (defined in Eq.~\ref{eq:llm_per_token} below).

\paragraph{(a) ViT Encoder.} Each of the $L_v$ ViT blocks contains multi-head self-attention (MHSA) and a feed-forward network (FFN). Linear projections contribute $8D_v^2 + 4 D_v D_{f,v}$ FLOPs per token (QKV, output, and two FFN layers); attention maps contribute $4N^2 D_v$ FLOPs per layer. The ViT computes attention \emph{per image} (controlled by \texttt{cu\_seqlens}), so $N$ refers to the patch count of a single image.

\paragraph{(b) Patch Merger \& DeepStack.} After $M{\times}M$ spatial merge, each token has dimension $D_{\text{in}} = M^2 D_v$. The merger MLP consists of \texttt{fc1}: $D_{\text{in}} \to D_{\text{in}}$ ($2 \tfrac{N}{M^2} D_{\text{in}}^2$ FLOPs) and \texttt{fc2}: $D_{\text{in}} \to D_l$ ($2 \tfrac{N}{M^2} D_{\text{in}} D_l$ FLOPs). The main merger runs once on the final ViT output; each DeepStack merger runs on an intermediate layer's output, giving a total count of $L_m$.

\paragraph{(c) LLM.} Qwen3-VL uses Grouped-Query Attention (GQA) with $n_q$ query heads and $n_{kv}$ key-value heads, and a gated MLP (SwiGLU). The per-token linear cost per decoder layer is:
\begin{equation}
C_l = \underbrace{2 D_l (2n_q + 2 n_{kv}) d_h}_{\mathclap{\text{QKV + output proj.}}} \;+\; \underbrace{6 D_l D_{f,l}}_{\mathclap{\text{Gated MLP (gate + up + down)}}}
\label{eq:llm_per_token}
\end{equation}
where $d_h = D_l / n_q$ and $D_{f,l}$ is the LLM FFN dimension. The Q and output projections each have dimension $D_l \times D_l$, while K and V projections have dimension $D_l \times (n_{kv} d_h)$. Attention maps add $4 N_l^2 D_l$ FLOPs per layer (since $n_q d_h = D_l$, this is independent of $n_{kv}$).

\paragraph{Architecture parameters.} For Qwen3-VL-2B: (a)~ViT: $L_v{=}24$, $D_v{=}1024$, $D_{f,v}{=}4096$, $n_h{=}16$; (b)~Patch Merger: $M{=}2$, $D_{\text{in}} = 4096$, $D_l{=}2048$, $|\mathcal{I}_{\text{ds}}|{=}3$; (c)~LLM: $L_l{=}28$, $D_l{=}2048$, $D_{f,l}{=}6144$, $n_q{=}16$, $n_{kv}{=}8$, $d_h{=}128$.

\paragraph{FLOPs savings with PixelPrune.} Let $N_s = |\mathcal{S}| \le N$ denote the number of retained patches after PixelPrune (\S\ref{sec:selectpatch}). Replacing $N$ with $N_s$ in the formula above yields: (i)~\textbf{ViT}: attention FLOPs reduce from $O(N^2)$ to $O(N_s^2)$ (quadratic saving), linear FLOPs reduce proportionally; (ii)~\textbf{Patch Merger}: scales linearly, from $O(N/M^2)$ to $O(N_s/M^2)$; (iii)~\textbf{LLM}: the sequence length reduces from $N_l$ to $N_l' = N_s/M^2+T$, so attention reduces from $O(N_l^2)$ to $O(N_l'^2)$. When $N_s \ll N$, the end-to-end reduction is substantial across all three stages. Plugging the architecture parameters and measured $N_s$ into Eq.~\ref{eq:flops_total} yields the theoretical FLOPs reported in Table~\ref{tab:efficiency}.

\section{General-Domain Evaluation}
\label{app:general}

Although PixelPrune primarily targets document and GUI scenarios, we evaluate whether it degrades performance on general-domain images with rich textures and fewer uniform regions (Table~\ref{tab:general}; six benchmarks, Qwen3-VL-2B, training-free).

Exact matching ($\tau{=}0$) retains 80--100\% of tokens with $\leq$1.1\% accuracy change. Near-exact matching further reduces retention: at $\tau{=}0.05$, retention drops to 72--89\% yet accuracy stays within 0.6\% on most benchmarks. Even at $\tau{=}0.10$ (68--83\% retention), degradation remains modest ($\leq$2\%).

\begin{table*}[t]
	\caption{General-domain evaluation on Qwen3-VL-2B (training-free, Max distance metric). Each benchmark reports retain ratio (\%) and accuracy (\%). Bold indicates the best accuracy among PixelPrune variants per benchmark.}
	\label{tab:general}
	\centering
	\resizebox{\textwidth}{!}{
	\begin{tabular}{l||cc|cc|cc|cc|cc|cc}
		\toprule
		& \multicolumn{2}{c|}{\textbf{MMBench}} & \multicolumn{2}{c|}{\textbf{MMMU}} & \multicolumn{2}{c|}{\textbf{MMStar}} & \multicolumn{2}{c|}{\textbf{RealWorldQA}} & \multicolumn{2}{c|}{\textbf{HR-Bench}} & \multicolumn{2}{c}{\textbf{V*Bench}} \\
		\cmidrule(lr){2-3} \cmidrule(lr){4-5} \cmidrule(lr){6-7} \cmidrule(lr){8-9} \cmidrule(lr){10-11} \cmidrule(lr){12-13}
		\textbf{$\tau$} & \textbf{Ret.} & \textbf{Acc.} & \textbf{Ret.} & \textbf{Acc.} & \textbf{Ret.} & \textbf{Acc.} & \textbf{Ret.} & \textbf{Acc.} & \textbf{Ret.} & \textbf{Acc.} & \textbf{Ret.} & \textbf{Acc.} \\
		\midrule
		Full (no dedup) & \color{gray}100 & \color{gray}77.1 & \color{gray}100 & \color{gray}50.3 & \color{gray}100 & \color{gray}55.1 & \color{gray}100 & \color{gray}66.3 & \color{gray}100 & \color{gray}70.6 & \color{gray}100 & \color{gray}71.2 \\
		\midrule
		0 (exact) & 94.7 & \textbf{77.1} & 80.1 & 49.2 & 81.2 & 55.1 & 97.7 & 66.3 & 96.3 & 71.0 & 99.7 & 71.2 \\
		0.005     & 94.2 & 76.9 & 79.2 & 49.7 & 80.5 & 55.1 & 97.6 & 66.1 & 94.9 & 70.8 & 99.3 & 71.2 \\
		0.02      & 91.8 & 76.9 & 76.1 & 49.1 & 77.2 & \textbf{55.3} & 96.6 & 65.8 & 84.6 & 71.4 & 94.0 & \textbf{73.8} \\
		0.05      & 88.0 & 76.5 & 72.4 & \textbf{50.3} & 73.3 & 55.0 & 89.2 & 66.1 & 77.3 & \textbf{72.0} & 87.7 & 72.8 \\
		0.10      & 83.5 & 76.5 & 68.0 & 48.2 & 68.8 & 54.8 & 75.4 & \textbf{67.1} & 70.1 & 71.9 & 82.3 & 71.2 \\
		\bottomrule
	\end{tabular}
	}
\end{table*}

Overall, PixelPrune's content-adaptive compression gracefully handles general-domain images: exact matching preserves nearly all tokens with negligible accuracy change, while near-exact matching provides moderate compression when needed.

\end{document}